\documentclass{article} 
\usepackage{mefomo_2023,times}

\usepackage{amsmath,amsfonts,amssymb,amsthm}

\usepackage{microtype}
\usepackage{graphicx}
\usepackage{subfig}

\usepackage[symbol]{footmisc}
\renewcommand{\thefootnote}{\fnsymbol{footnote}}

\iclrfinalcopy


\usepackage{amsmath,amsfonts,bm}









\def\eqref#1{equation~\ref{#1}}









\def\1{\bm{1}}










\DeclareMathAlphabet{\mathsfit}{\encodingdefault}{\sfdefault}{m}{sl}
\SetMathAlphabet{\mathsfit}{bold}{\encodingdefault}{\sfdefault}{bx}{n}













\title{A Tale of Two Circuits: Grokking as Competition of Sparse and Dense Subnetworks}

\author{William Merrill\thanks{Equal contribution} \hspace{0.1mm}, Nikolaos Tsilivis\footnotemark[1] \hspace{0.1mm} \& Aman Shukla \\
New York University
}

\usepackage[hidelinks]{hyperref}
\usepackage{url}
\usepackage{cleveref}

\usepackage{xcolor}

\usepackage{amsthm}
\newtheorem{proposition}{Proposition}
\Crefformat{proposition}{Proposition #2#1#3}

\usepackage{float}
\usepackage{paralist}

\begin{document}

\maketitle

\begin{abstract}
Grokking is a phenomenon where a model trained on an algorithmic task first overfits but, then, after a large amount of additional training, undergoes a phase transition to generalize perfectly. We empirically study the internal structure of networks undergoing grokking on the sparse parity task, and find that the grokking phase transition corresponds to the emergence of a sparse subnetwork that dominates model predictions. On an optimization level, we find that this subnetwork arises when a small subset of neurons undergoes rapid norm growth, whereas the other neurons in the network decay slowly in norm. Thus, we suggest that the grokking phase transition can be understood to emerge from \emph{competition} of two largely distinct subnetworks: a dense one that dominates before the transition and generalizes poorly, and a sparse one that dominates afterwards.

\end{abstract}

\section{Introduction}
\renewcommand*{\thefootnote}{\arabic{footnote}}


Grokking \citep{powers2022grokking,barak2022hidden} is a curious generalization trend for neural networks trained on certain algorithmic tasks. Under grokking, the network's accuracy (and loss) plot displays two phases. Early in training, the training accuracy goes to $100\%$, while the generalization accuracy remains near chance. Since the network appears to be simply memorizing the data in this phase, we refer to this as the \emph{memorization phase}. Significantly later in training, the generalization accuracy spikes suddenly to $100\%$, which we call the \emph{grokking transition}.

This mysterious pattern defies conventional machine learning wisdom: after initially overfitting, the model is somehow learning the correct, generalizing behavior without any disambiguating evidence from the data. Accounting for this strange behavior motivates developing a theory of grokking rooted in optimization.
Moreover, grokking resembles so-called emergent behavior in large language models \citep{barret2022emergent}, where performance on some (often algorithmic) capability remains at chance below a critical scale threshold, but, with enough scale, shows roughly monotonic improvement.
We thus might view grokking as a controlled test bed for emergence in large language models, and hope that understanding the dynamics of grokking could lead to hypotheses for analyzing such emergent capabilities. Ideally, an effective theory for such phenomena should be able to understand the causal mechanisms behind the phase transitions, predict on which downstream tasks they could happen, and disentangle the statistical (number of data) from the computational (compute time, size of network) aspects of the problem.

While grokking was originally identified on algorithmic tasks, \citet{liu2023omnigrok} show it can be induced on natural tasks from other domains with the right hyperparameters. Additionally, grokking-like phase transitions have long been studied in the statistical physics community \citep{EnVa01}, albeit in a slightly different setting (online gradient descent, large limits of model parameters and amount of data etc.).
Past work analyzing grokking has reverse-engineered the network behavior in Fourier space \citep{nanda2023progress} and found measures of progress towards generalization before the grokking transition \citep{barak2022hidden}.
\citet{thilak2022the} observe a ``slingshot'' pattern during grokking: the final layer weight norm follows a roughly sigmoidal growth trend around the grokking phase transition.
This suggests grokking is related to the magnitude of neurons within the network, though without a clear theoretical explanation or account of individual neuron behavior.

In this work, we aim to better understand grokking on sparse parity \citep{barak2022hidden} by studying the \textit{sparsity} and \textit{computational structure} of the model over time. We empirically demonstrate a connection between grokking, emergent sparsity, and competition between different structures inside the model (\Cref{fig:three_phases}).
We first show that, after grokking, network behavior is controlled by a sparse subnetwork (but by a dense one before the transition).
Aiming to better understand this sparse subnetwork, we then demonstrate that the grokking phase transition corresponds to accerelated norm growth in a \emph{specific} set of neurons, and norm decay elsewhere.
After this norm growth, we find that the targeted neurons quickly begin to dominate network predictions, leading to the emergence of the sparse subnetwork.
We also find that the size of the sparse subnetwork corresponds to the size of a disjunctive normal form circuit for computing parity, suggesting this may be what the model is doing.
Taken together, our results suggest grokking arises from targeted norm growth of specific neurons within the network. This targeted norm growth sparsifies the network, potentially enabling generalizing discrete behavior that is useful for algorithmic tasks.

\begin{figure}
    \centering
    \includegraphics[scale=0.5]{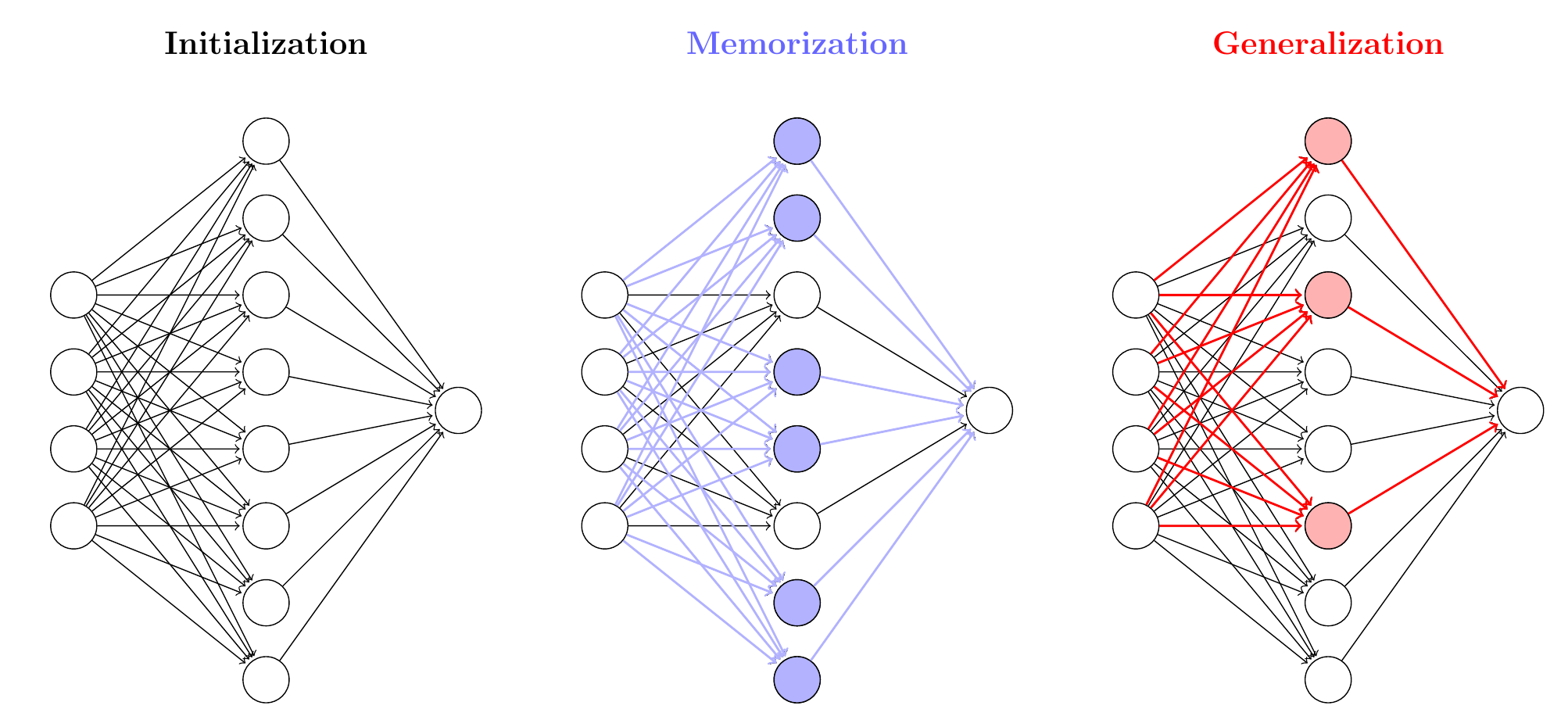}
    \caption{An illustration of the structure of a neural network during training in algorithmic tasks. Neural networks often exhibit a memorization phase that corresponds to a dense network, followed by the generalization phase where a sparse, largely disjoint to the prior one, model takes over.}
    \label{fig:three_phases}
\end{figure}

\section{Tasks, Models, and Methods} \label{sec:methods}

\paragraph{Sparse Parity Function.} We focus on analyzing grokking in the problem of learning a sparse $(n, k)$-parity function \citep{barak2022hidden}.
A $(n, k)$-parity function takes as input a string $x \in \{ \pm 1 \}^n$ returns $\pi(x) = \prod_{i \in S} x_i \in \{ \pm 1 \}$, where $S$ is a fixed, \textit{hidden} set of $k$ indices. The training set consists of $N$ i.i.d. samples of $x$ and $\pi(x)$.
We call the $(n, k)$-parity problem \emph{sparse} when $k \ll n$, which is satisfied by our choice of $n = 40$, $k=3$, and $N=1000$. 

\paragraph{Network Architecture.}
Following \citet{barak2022hidden}, we use a $1$-layer ReLU net:
\begin{align*}\label{eq:arch}
    f(x) &= u^\intercal \sigma (Wx + b) \\
    \Tilde{y} &= \mathrm{sgn}\left (f(x)\right) ,
\end{align*}
where $\sigma (x) = \max(0, x)$ is ReLU, $u \in \mathbb{R}^p, W \in \mathbb{R}^{p \times n}$, and $ b \in \mathbb{R}^p$.  We minimize the hinge loss $\ell(x, y) = \max(0, 1-f(x) y)$, using stochastic gradient descent (batch size $B$) with constant learning rate $\eta$ and (potential) weight decay of strength $\lambda$ (that is, we minimize the regularized loss $\ell(x, y) + \lambda \| \theta \|_2$, where $\theta$ denotes all the parameters of the model). 
Unless stated otherwise, we use weight decay $\lambda = 0.01$, learning rate $\eta = 0.1$ batch size $B = 32$, and hidden size $p = 1000$.
We train each network 5 times, varying the random seed for generating the train set and training the model, but keeping the test set of $100$ points fixed.

\subsection{Active Subnetworks and Effective Sparsity} \label{sec:sparsity}
We use a variant of weight magnitude pruning \citep{MoSm89} to find active subnetworks that control the full network predictions. The method assumes a given support of input data $\mathcal X$. Let $f$ be the network prediction function and $f_{k}$ be the prediction where the $p-k$ neurons with the least-magnitude incoming edges have been pruned. We define the \emph{active subnetwork} of $f$ as $f_k$ where $k$ is minimal such that, for all $x \in \mathcal X$, $f(x) = f_{k}(x)$.

We will use the active subnetwork to identify important structures within a network during grokking. We can also naturally use it to measure sparsity: we define the \emph{effective sparsity} of $f$ as the number of neurons in the hidden layer of the active subnetwork of $f$.




\section{Results}


We see in \Cref{fig:accloss_spars} that our sparse parity task indeed displays grokking, both in accuracy and loss. We now turn to analyzing the internal network structure before, during, and after grokking. We refer to Appendix \ref{app:other_configs} for additional configurations (smaller weight decay or larger parity size) that support our findings\footnote{Code available on \url{https://github.com/Tsili42/parity-nn}}.

\begin{figure}
    \centering
    \includegraphics[scale=0.27]{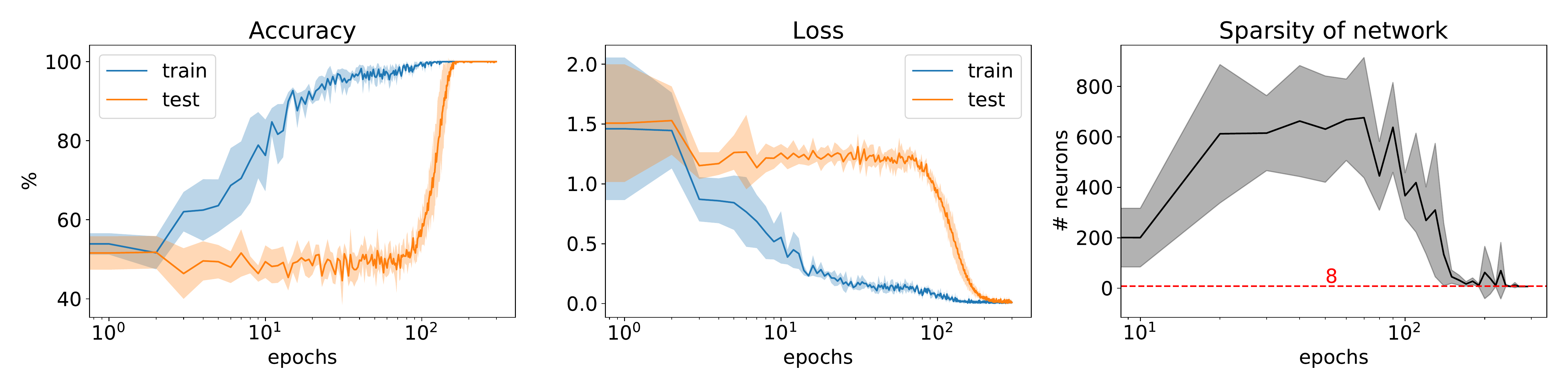}
    \caption{Accuracy (left), Average Loss (middle) and Effective Sparsity (right) during training of an FC network on $(40, 3)$ parity. Generalization accuracy suddenly jumps from random chance to flawless prediction concurrent with sparsification of the model. Shaded areas show randomness over the training dataset sampling, model initialization, and stochasticity of SGD (5 random seeds).}
    \label{fig:accloss_spars}
\end{figure}

\subsection{Grokking Corresponds to Sparsification}

\Cref{fig:accloss_spars} (right) shows the effective sparsity~(number of active neurons; cf. \Cref{sec:sparsity}) of the network over time. Noticeably, it becomes orders of magnitude sparser as it starts generalizing to the test set, and crucially, this phase transition happens at the same time as the loss phase transition. This can be directly attributed to the norm regularization being applied in the loss function, as it kicks in right after we reach (almost) zero in the data-fidelity part of the loss. Interestingly, this phase transition can be calculated solely from the training data but correlates with the phase transition in the test accuracy.

\citet{nanda2023progress} observe sparsity in the Fourier domain after grokking, whereas we have found it in the conventional network structure as well.
Motivated by the discovery of this sparse subnetwork, we now turn our attention to understanding why this subnetwork emerges and its structure.


\subsection{Selective Norm Growth Induces Sparsity During Grokking}

Having identified a sparse subnetwork of neurons that emerges to control network behavior after the grokking transition, we study the properties of these neurons throughout training (before the formation of the sparse subnetwork). \Cref{fig:subnetworks} (left) plots the average neuron norm for three sets of neurons: the neurons that end up in the sparse subnetwork, the complement of those neurons, and a set of random neurons with the same size as the sparse subnetwork.
We find that the 3 networks have similar average norm up to a point slightly before the grokking phase transition, at which the generalizing subnetwork norm begins to grow rapidly.

In \Cref{fig:subnetworks} (right), we measure the faithfulness of the neurons in the sparse subnetwork over time: in other words, the ability of these neurons alone to reconstruct the full network predictions on the test set, measured as accuracy. The grokking phase transition corresponds to these networks emerging to fully explain network predictions, and we believe this is likely a causal effect of norm growth.\footnote{Conventional machine learning wisdom associates small weight norm with sparsity, so it may appear counterintuitive that \emph{growing} norm induces sparsity. We note that the growth of selective weights can lead to effective sparsity because the large weights dominate linear layers \citep{merrill-etal-2021-effects}.}
The fact that the performance of its complement degrades after grokking supports the conclusion that the sparse network is ``competing'' with another network to inform model predictions, and that the grokking transition corresponds to a sudden switch where the sparse network dominates model output.

The element of competition between the different circuits is further evident when plotting the norm of individual neurons over time. \Cref{fig:ind_neurons} in the Appendix shows that neurons active during the memorization phase slightly grow in norm before grokking but then ``die out", while the the neurons of sparse subnetwork are inactive during memorization and then explode in norm. The fact that the model is overparameterized allows this kind of competition to take place.

\subsection{Subnetwork Computational Structure}

\paragraph{Sparse Subnetwork.} Across $5$ runs, the sparse subnetwork has size $\{6, 6, 6, 8, 8\}$. This suggests that the network may be computing the parity via a representation resembling disjunctive normal form (DNF), via the following argument.
A standard DNF construction uses $2^k =8$ neurons to compute the parity of $k=3$ bits (\Cref{thm:dnf-general}). We also derive a modified DNF net that uses only $6$ neurons to compute the parity of $3$ bits (\Cref{thm:dnf-special}). Since our sparse subnetwork always contains either $6$ or $8$ neuron, we speculate it may always be implementing a variant of these constructions. However, there is an even smaller network computing an $(n, 3)$-parity with only 4 neurons via a threshold-gate construction, but it does not appear to be found by our networks (\Cref{thm:threshold}).

\paragraph{Dense Subnetwork.} The network active during the so-called memorization phase is not exactly memorizing. Evidence for this claim comes from observing grokking on the binary operator task of \citet{powers2022grokking}. For the originally reported division operator task, the network obtains near zero generalization prior to grokking (\Cref{fig:memorization_evidence}, right). However, switching the operator to addition, the generalization accuracy is above chance before grokking (\Cref{fig:memorization_evidence}, left). We hypothesize this is because the network, even pre-grokking, can generalize to unseen data since addition (unlike division) is commutative. In this sense, it is not strictly memorizing the training data.

\begin{figure}
    \centering
    \includegraphics[scale=0.3]{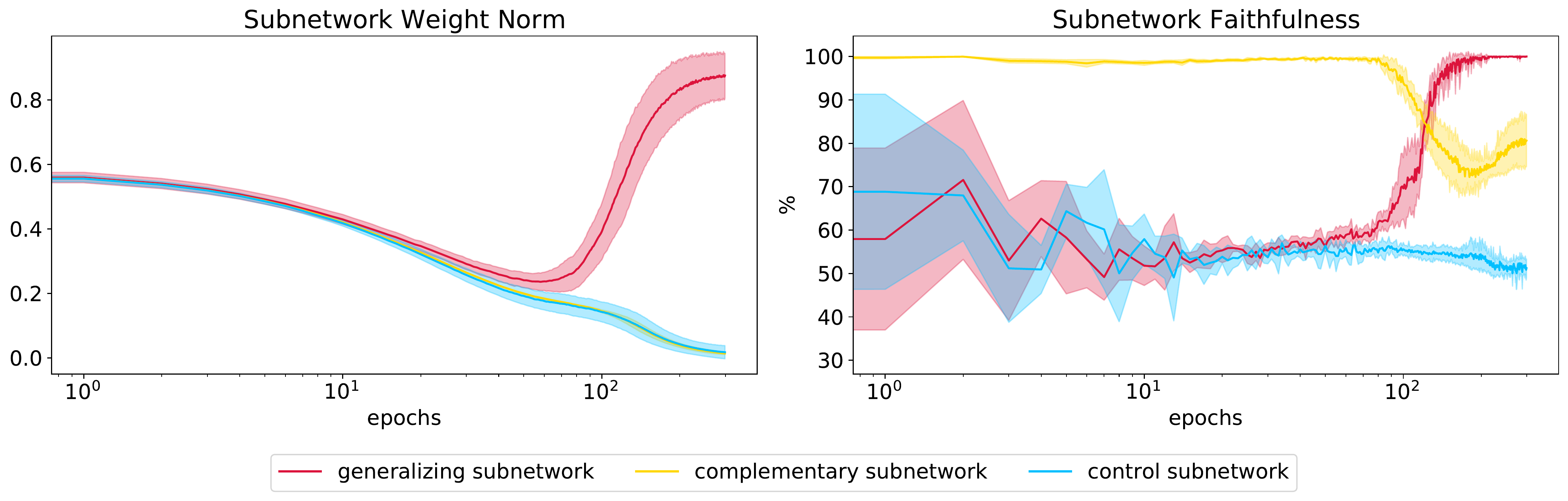}
    \caption{Left: Average norm of different subnetworks during training. Right: Agreement between the predictions of a subnetwork and the full network on the test set. The \textit{generalizing} subnetwork is the final sparse net, the \textit{complementary} subnetwork is its complement, and the \textit{control} subnetwork is a random network with the same size as the generalizing one.}
    \label{fig:subnetworks}
\end{figure}

\section{Conclusion}

We have shown empirically that the grokking phase transition, at least in a specific setting, arises from competition between a sparse subnetwork and the (dense) rest of the network.
Moreover, grokking seems to arise from selective norm growth of this subnetwork's neurons.
As a result, the sparse subnetwork is largely inactive during the memorization phase, but soon after grokking, fully controls model prediction.

We speculate that norm growth and sparsity may facilitate emergent behavior in large language models similar to their role in grokking. 
As preliminary evidence, \citet{merrill-etal-2021-effects} observed monotonic norm growth of the parameters in T5, leading to ``saturation'' of the network in function space.\footnote{Saturation measures the discreteness of the network function, but may relate to effective sparsity.} More promisingly, \citet{dettmers2022gptint} observe that a targeted subset of weights in pretrained language models have high magnitude, and that these weights overwhelmingly explain model predictions.
It would also be interesting to extend our analysis of grokking to large language models: specifically, does targeted norm growth subnetworks of large language models \citep{dettmers2022gptint} facilitate emergent behavior?

\section{Acknowledgements}

This material is based upon work supported by the National Science Foundation under NSF Award 1922658.

\bibliography{references.bib}
\bibliographystyle{iclr2023_conference}

\appendix
\section{Binary Operator Experiments}

We trained a decoder only transformer with 2 layers, width 128, and 4 attention heads \citep{powers2022grokking}. In both operator settings, we used the AdamW optimizer, with a learning rate of $10^{-3}$, $\beta_1 = 0.9$ and $\beta_2 = 0.98$, weight decay equal to 1, batch size equal to 512, 9400 sample points and an optimization limit of $10^5$ updates. We repeated the experiments for both operators with 3 random seeds and aggregated the results.  

\begin{figure}[H]
    \centering
    \includegraphics[scale=0.4]{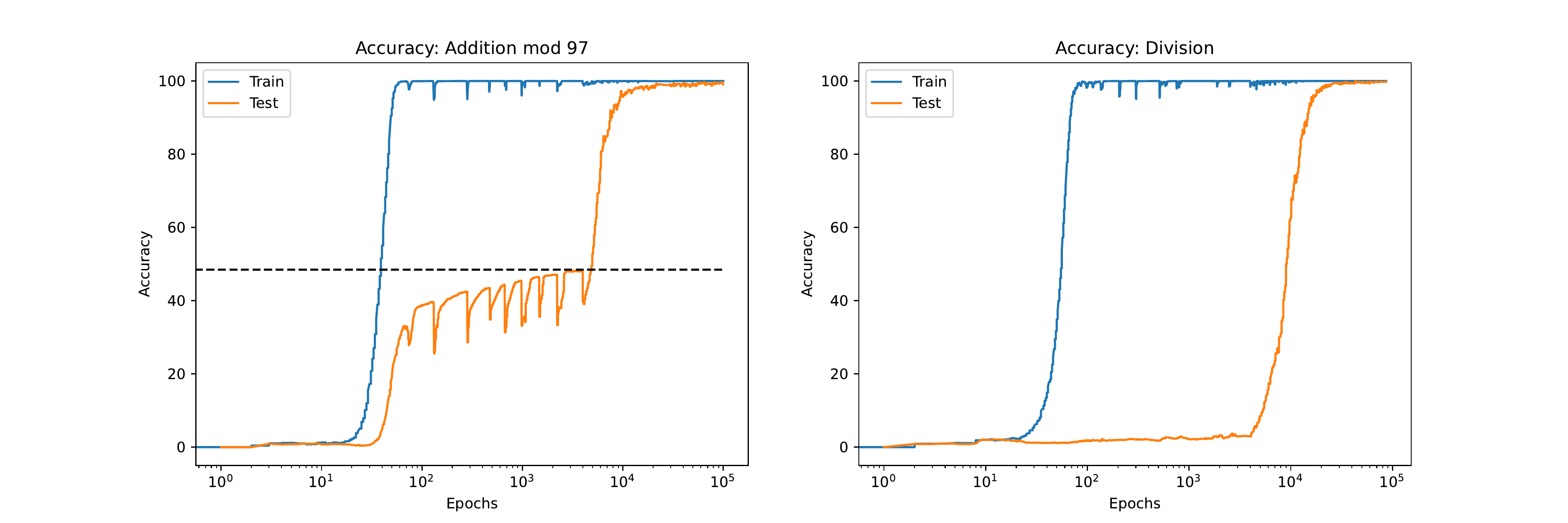}
    \caption{Accuracy curves for addition (left) and division (right). For the addition operator, the dashed line represents the \% of dataset that can be solved by commuting test points and then looking them up in the memorized training set.
    The generalization accuracy before grokking matches this level, suggested that the network has learned to generalize the commutative property of addition before it learns to generalize fully.}
    \label{fig:memorization_evidence}
\end{figure}

\section{Additional plots}

\begin{figure}[H]
    \centering
    \includegraphics[scale=0.42]{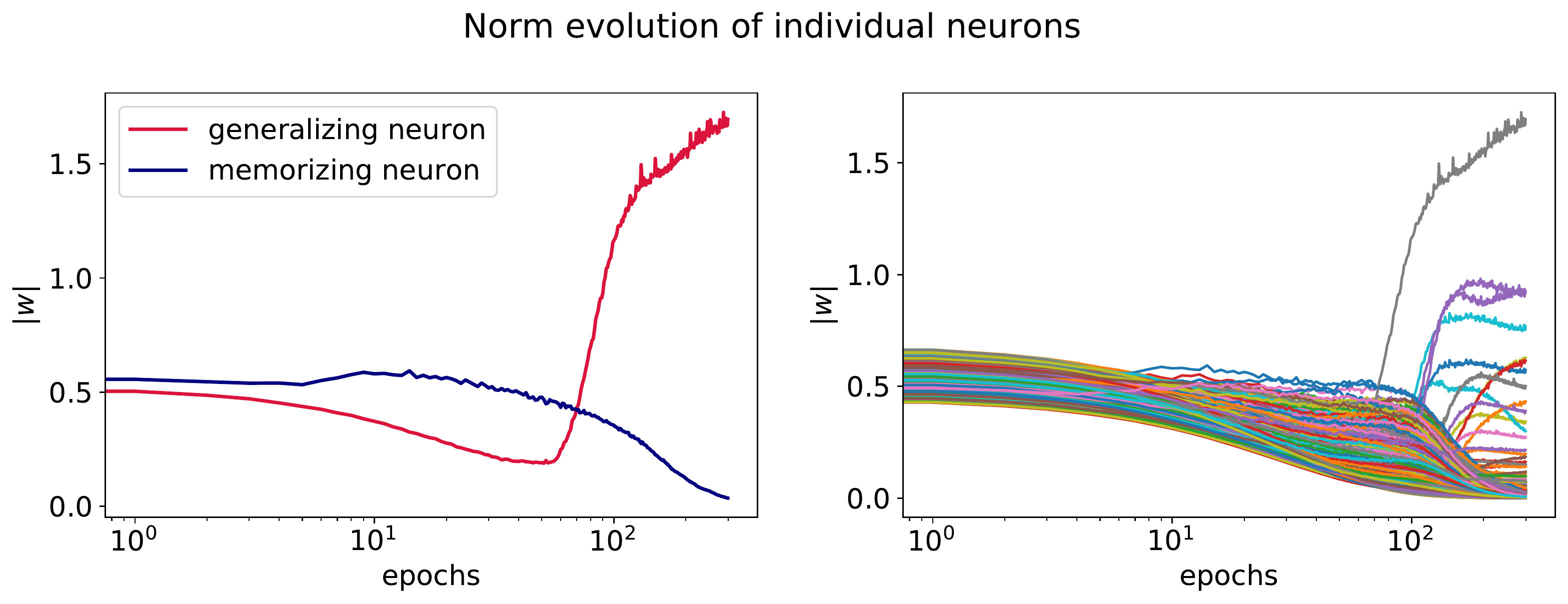}
    \caption{Weight norm of individual neurons during training. Left: Evolution of the dominant neurons during the memorization epoch (first time we hit $>$ 98\% train accuracy) and final epoch (that corresponds to the generalizing subnetwork). Right: Weight norm over time for all neurons. Notice that most of them are driven to 0.}
    \label{fig:ind_neurons}
\end{figure}

\section{Additional Configurations}\label{app:other_configs}

We provide accuracy, loss, sparsity, subnetwork norm and subnetwork faithfulness plots for smaller weight decay (Figures \ref{fig:wd0001_acc} and \ref{fig:wd0001_sub}), and for larger parity size (Figures \ref{fig:k4_acc} and \ref{fig:k4_sub}). The experimental observations are consistent with those of the main body of the paper.

\begin{figure}
    \centering \includegraphics[scale=0.27]{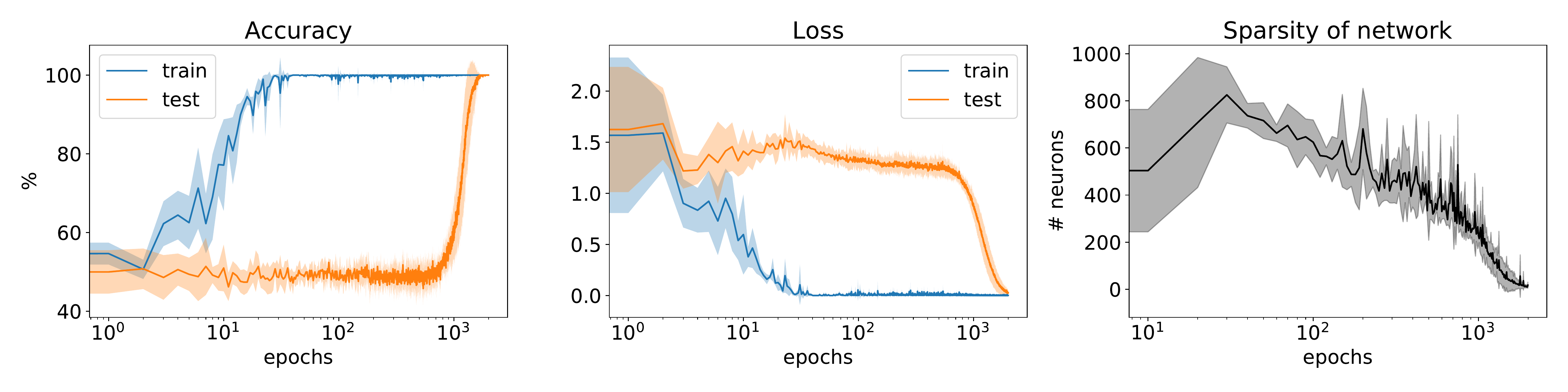}
    \caption{Reproduction of Figure \ref{fig:accloss_spars} for smaller weight decay $\lambda=0.001$ (the rest of the hyperparameters are the same as in the standard setup). Accuracy (left), Average Loss (middle) and Effective Sparsity (right) during training of an FC network on $(40, 3)$-parity.}
    \label{fig:wd0001_acc}
\end{figure}

\begin{figure}
    \centering
    \includegraphics[scale=0.3]{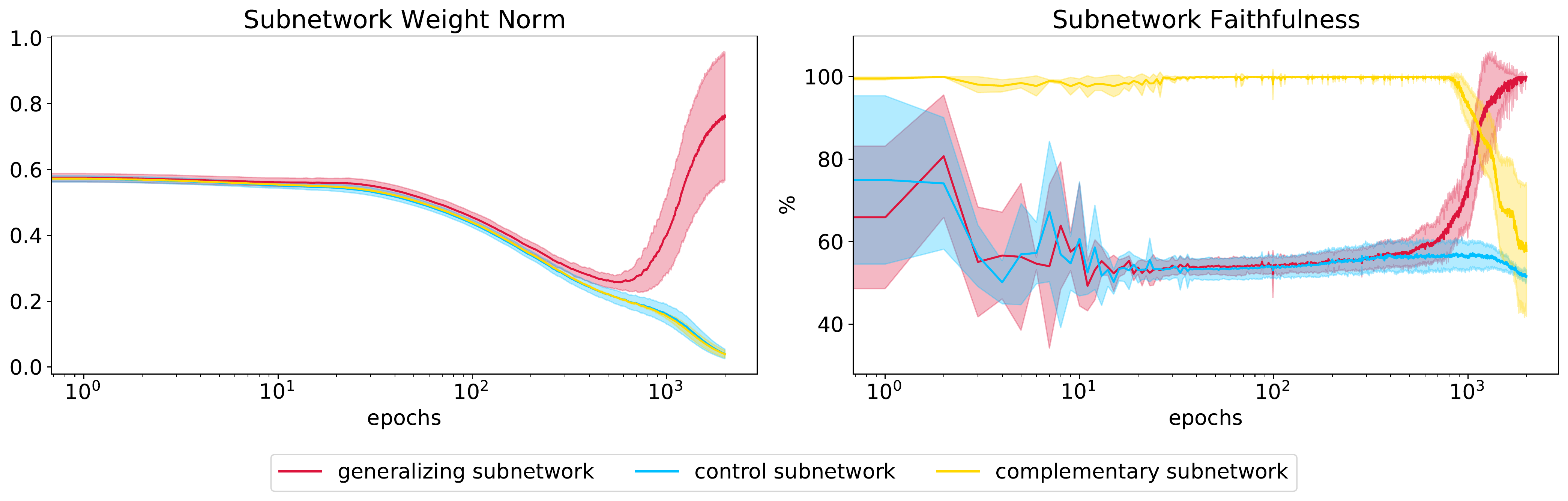}
    \caption{Reproduction of Figure \ref{fig:subnetworks} for smaller weight decay $\lambda=0.001$ (the rest of the hyperparameters are the same as in the standard setup). Left: Average norm of different subnetworks during training. Right: Agreement between the predictions of a subnetwork and the full network on the test set.}
    \label{fig:wd0001_sub}
\end{figure}

\begin{figure}
    \centering
    \includegraphics[scale=0.27]{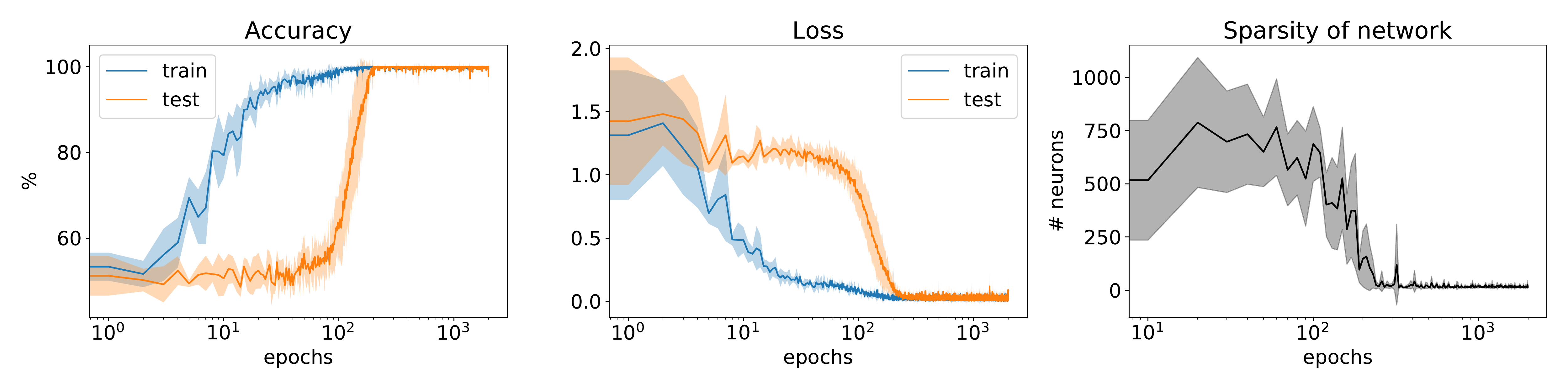}
    \caption{Reproduction of Figure \ref{fig:accloss_spars} for larger parity size $k=4$ (the rest of the hyperparameters are the same as in the standard setup). Accuracy (left), Average Loss (middle) and Effective Sparsity (right) during training of an FC network on $(40, 4)$ parity.}
    \label{fig:k4_acc}
\end{figure}

\begin{figure}
    \centering
    \includegraphics[scale=0.3]{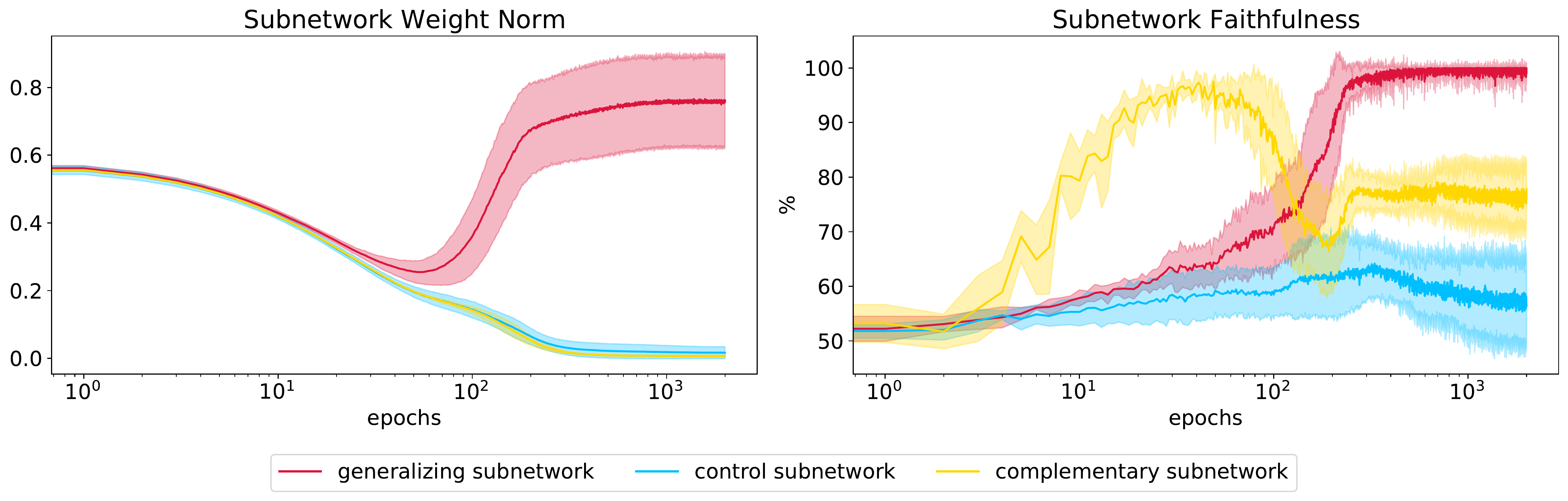}
    \caption{Reproduction of Figure \ref{fig:subnetworks} for larger parity size $k=4$ (the rest of the hyperparameters are the same as in the standard setup). Left: Average norm of different subnetworks during training. Right: Agreement between the predictions of a subnetwork and the full network on the test set.}
    \label{fig:k4_sub}
\end{figure}

\section{Computing Parity with Neural Nets} \label{sec:parity}

We say that a neural net of the form defined in \autoref{sec:methods} computes parity iff its output is positive when the parity is $1$ and negative otherwise.

We first show a general way to represent parity in ReLU networks for any parity size $k$. This construction requires $2^k$ hidden neurons.

\begin{proposition} \label{thm:dnf-general}
For any $n$, there exists a 1-layer ReLU net with $2^k$ neurons that computes $(n, k)$-parity.
\end{proposition}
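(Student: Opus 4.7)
The plan is to give an explicit construction: assign one hidden ReLU neuron to each of the $2^k$ possible sign patterns on the hidden coordinate set $S$, so that the neuron fires with output exactly $1$ on its assigned pattern and $0$ otherwise; then use the top-layer weights to sum these indicators with signs equal to the target parity value on each pattern.

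Concretely, I would enumerate $\{\pm 1\}^k$ as $\{s^{(1)}, \ldots, s^{(2^k)}\}$ and, for each pattern $s^{(j)}$, set the $j$-th row $W_j$ of $W$ to be $s^{(j)}_i$ for $i \in S$ and $0$ for $i \notin S$, and the bias to $b_j = -(k-1)$. The key calculation is that for any input $x \in \{\pm 1\}^n$, $\langle W_j, x \rangle = \sum_{i \in S} s^{(j)}_i x_i$, which equals $k$ when $x$ restricted to $S$ matches $s^{(j)}$ and is at most $k-2$ otherwise (since any mismatch flips a $+1$ term into a $-1$ term, changing the sum by $2$). Therefore $\sigma(\langle W_j, x \rangle + b_j)$ equals $1$ when $x_S = s^{(j)}$ and $0$ otherwise, so the $j$-th hidden unit is exactly the indicator $\mathbf{1}[x_S = s^{(j)}]$.

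Finally, I would set the top-layer weight $u_j = \prod_{i \in S} s^{(j)}_i$, which is $\pm 1$ according to the parity of $s^{(j)}$. Since for each input exactly one indicator is active, $f(x) = u_{j^*}$ where $j^*$ is the unique index with $s^{(j^*)} = x_S$, and by construction $u_{j^*} = \prod_{i \in S} x_i = \pi(x) \in \{\pm 1\}$. Thus $f(x)$ has the same sign as $\pi(x)$ (indeed equals it), which matches the convention in \Cref{sec:parity} for the network to compute parity, and the total number of hidden neurons used is $2^k$.

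There is no real obstacle here; the argument is entirely a verification. The only step that requires any care is the margin calculation showing $\sigma(\langle W_j, x\rangle + b_j)$ is a clean indicator, which relies on the fact that the inputs are $\pm 1$-valued so that $\langle W_j, x\rangle$ jumps from $k$ to at most $k-2$ as soon as $x_S$ differs from $s^{(j)}$ in any coordinate; the bias $-(k-1)$ is chosen to sit strictly in this gap.
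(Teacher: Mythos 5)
Your construction is correct and is the same one the paper sketches: one hidden neuron per configuration of the $k$ relevant bits acting as an AND gate (your bias $-(k-1)$ is exactly the ``crucial'' bias the paper alludes to), with output weights signed by the parity of each configuration. Your write-up just makes the margin calculation explicit, which the paper leaves implicit.
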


\begin{proof}
We use each $2^k$ neurons to match a specific configuration of the $k$ parity bits by using the first affine transformation to implement an AND gate (note that the bias term is crucial here). In the output layer, we add positive weight on edges from neurons corresponding to configurations with parity $1$ and negative weight for neurons corresponding to configurations with parity $-1$.
\end{proof}

In the case where $k=3$, we show that there is a simpler construction with $6$ neurons.

\begin{proposition} \label{thm:dnf-special}
For any $n$, there exists a 1-layer ReLU net with $6$ neurons that computes $(n, 3)$-parity.
\end{proposition}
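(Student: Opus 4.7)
The plan is to exhibit an explicit $6$-neuron construction by modifying the $2^k = 8$-neuron DNF of \Cref{thm:dnf-general}: keep the four AND gates for the positive-parity configurations and replace the four negative-parity AND gates with just two sum-dependent ``offset'' neurons. Without loss of generality I take the parity indices to be $S = \{1,2,3\}$ and set every neuron's weights on coordinates outside $S$ to zero, so each hidden pre-activation depends only on $s := x_1 + x_2 + x_3 \in \{-3, -1, 1, 3\}$.

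First, for each positive-parity pattern $\sigma \in \{(1,1,1),(1,-1,-1),(-1,1,-1),(-1,-1,1)\}$, I would place an AND-style neuron with input weights $(\sigma_1, \sigma_2, \sigma_3, 0, \ldots, 0)$, bias $-2$, and output weight $+5$; exactly as in the proof of \Cref{thm:dnf-general}, the activation $\mathrm{ReLU}(\sigma \cdot x - 2)$ equals $1$ when $x$ matches $\sigma$ on the parity bits and $0$ otherwise. Then I would add two mirror-image offset neurons with weights $\pm(1,1,1,0,\ldots,0)$, bias $+1$, and output weight $-1$, contributing $-\mathrm{ReLU}(s+1) - \mathrm{ReLU}(-s+1)$ to $f(x)$.

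The proof closes with a four-case verification in $s$. The offset pair evaluates to $-4, -2, -2, -4$ at $s = 3, 1, -1, -3$ respectively, while the AND contribution is $+5$ at $s \in \{3, -1\}$ (positive parity, exactly one AND fires) and $0$ at $s \in \{1, -3\}$ (negative parity, no AND fires). Combining, $f(x) \in \{1, -2, 3, -4\}$ at $s = 3, 1, -1, -3$, matching $\mathrm{sgn}(f(x)) = x_1 x_2 x_3$ on every input. The one step requiring care is choosing the AND output weight large enough to dominate the offset at the worst case $s = 3$, where the offset magnitude is largest ($=4$); any AND output weight strictly greater than $4$ works, and using two mirror-image offsets rather than a single asymmetric one keeps the construction manifestly invariant under $x \mapsto -x$, mirroring the natural symmetry of parity.
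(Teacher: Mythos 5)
Your construction is correct and follows essentially the same strategy as the paper's proof: keep the four DNF-style AND gates for the positive-parity patterns with a large output weight, add two auxiliary neurons that supply a default negative contribution, and verify by cases that the AND weight dominates the auxiliary term exactly when the parity is positive. The only difference is in the choice of the two auxiliary neurons---yours are symmetric threshold units on the sum $x_1+x_2+x_3$ (in the spirit of the $4$-neuron construction of \Cref{thm:threshold}), while the paper uses two OR-style gates conditioned on $x_3$---and your case analysis over $s \in \{\pm 1, \pm 3\}$ checks out.
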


\begin{proof}
Let $x_1, x_2, x_3$ be the $3$ parity bits. We construct $\mathbf h \in \mathbb R^6$ as follows, where $\sigma$ is ReLU:
\begin{align*}
    h_1 &= \sigma(-x_1 + -x_2 + 10 x_3 - 9) \\
    h_2 &= \sigma(-x_1 - x_2 + x_3 - 2) \\
    h_3 &= \sigma(x_1 + x_2 + x_3 - 2) \\
    h_4 &= \sigma(x_1 - x_2 - 10 x_3 - 9) \\
    h_5 &= \sigma(-x_1 + x_2 - x_3 - 2) \\
    h_6 &= \sigma(x_1 - x_2 - x_3 - 2) .
\end{align*}
In the final layer, we assign $h_1$ and $h_4$ a weight of $-1$, and $h_2, h_3, h_5$, and $h_6$ a weight of $+10$.

To show correctness, we first characterize the logical condition that each neuron encodes:
\begin{align*}
    h_1 > 0 &\iff (x_1 = -1 \vee x_2 = -1) \wedge x_3 = 1 \\
    h_2 > 0 &\iff x_1 = -1 \wedge x_2 = -1 \wedge x_3 = 1 \\
    h_3 > 0 &\iff x_1 = 1 \wedge x_2 = 1 \wedge x_3 = 1 \\
    h_4 > 0 &\iff (x_1 = 1 \vee x_2 = -1) \wedge x_3 = -1 \\
    h_5 > 0 &\iff x_1 = -1 \wedge x_2 = 1 \wedge x_3 = -1 \\
    h_6 > 0 &\iff x_1 = 1 \wedge x_2 = -1 \wedge x_3 = -1 .
\end{align*}
In the final layer, $h_1$ and $h_4$ contribute a weight of $-1$ whenever the parity is negative (and in two other cases).
But in the four cases when the true parity is positive, one of the other neurons contributes a positive weight of $+10$.
Thus, the sign of the network output is correct in all $8$ cases. We conclude that this $6$-neuron network correctly computes the parity of $x_1, x_2$, and $x_3$.
\end{proof}

However, there is a $4$-neuron construction computing parity,\footnote{We thank anonymous reviewer 3kdq for demonstrating this construction.} which, interestingly, our networks do not find:

\begin{proposition} \label{thm:threshold}
    For any $n$, there exists a $1$-layer ReLU net with $4$ neurons that computes $(n, 3)$-parity.
\end{proposition}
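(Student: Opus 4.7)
The plan is to reduce the problem to a one-dimensional sign pattern. Since the $k=3$ parity bits lie in $\{\pm 1\}$, the parity $x_{i_1} x_{i_2} x_{i_3}$ depends only on the signed sum $s = x_{i_1} + x_{i_2} + x_{i_3}$, which takes exactly four values $\{-3,-1,1,3\}$: the parity equals $+1$ precisely at $s \in \{-1, 3\}$ and $-1$ at $s \in \{-3, 1\}$. So it suffices to construct a ReLU function of $s$ with this alternating sign pattern, using four hidden units; each such unit is realizable in the full architecture by setting the first-layer weights on the indices outside $S$ to zero and on those in $S$ to $\pm 1$, together with an appropriate bias, exactly as in \Cref{thm:dnf-special}.

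I would use the four neurons $\sigma(s-2)$, $\sigma(-s-2)$, $\sigma(s)$, and $\sigma(-s)$. The first two fire only at the extreme values $s=3$ and $s=-3$ respectively, acting as indicator-like corrections at the endpoints. The latter two grow linearly in $|s|$ on each side of zero and already produce useful magnitudes at the inner points $s=\pm 1$. Combining them with output weights $(4, -4, -1, 1)$ produces the outputs $(+1, -1, +1, -1)$ at $s = 3, 1, -1, -3$ respectively; I would verify this by a direct four-case substitution.

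The only non-routine part is discovering these output weights: because the four values of $s$ are equally spaced, using three evenly-spaced thresholds (as one might naively attempt) leaves insufficient degrees of freedom to hit all four target signs. The trick is to pair antisymmetric neurons $\sigma(\pm s)$ for the inner points with boundary-activating neurons $\sigma(\pm s - 2)$ for the outer points, which together give exactly enough expressivity to realize the alternating pattern within a four-neuron budget. Once the construction is written down, correctness follows from enumeration over the four possible values of $s$, and the construction is independent of $n$ since all coordinates outside $S$ contribute zero by choice of first-layer weights.
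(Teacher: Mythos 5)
Your construction is correct: writing $f = 4\sigma(s-2) - 4\sigma(-s-2) - \sigma(s) + \sigma(-s)$ and checking $s \in \{3,1,-1,-3\}$ gives outputs $1, -1, 1, -1$, matching the parity in every case, and each unit is realizable in the given architecture with first-layer weights supported on $S$. This is essentially the same approach as the paper's proof, which also reduces to the signed sum $X = x_1+x_2+x_3$ and builds four ReLU units of $X$ verified by the same four-case enumeration (the paper just uses a constant unit plus thresholds at $\pm 1$ instead of your antisymmetric pairing).
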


\begin{proof}
    Let $X = x_1 + x_2 + x_3$ be the sum of the parity bits. We construct $\mathbf h \in \mathbb R^4$ as follows:
    \begin{align*}
        h_1 &= 1 \\
        h_2 &= \sigma(X - 1) \\
        h_3 &= \sigma(X + 1) \\
        h_4 &= \sigma(-X - 1) .
    \end{align*}
    In the final layer, we assign $h_1$ a weight of $1$, $h_2$ a weight of $2$, and $h_3$ and $h_4$ a weight of $-1$. We proceed by cases over the possible values of $X \in \{\pm 1, \pm 3\}$, which uniquely determines the parity:
    \begin{enumerate}
        \item \underline{$X = -3$:} Then there are three input bits with value $-1$, so the parity is $-1$. We see that $h_1 = 1$, $h_2 = 0$, $h_3 = 0$, and $h_4 = 2$. So the output is $h_1 - h_4 = -1$.
        \item \underline{$X = -1$:} Then there are two input bits with value $-1$, so the parity is $1$. We see that $h_1 = 1$, $h_2 = 0$, $h_3 = 0$, and $h_4 = 0$. So the output is $h_1 = 1$.
        \item \underline{$X = 1$:} Then there is one input with value $-1$, so the parity is $-1$. We see that $h_1 = 1$, $h_2 = 0$, $h_3 = 2$, and $h_4 = 0$. So the output is $h_1 - h_3 = -1$.
        \item \underline{$X = 3$:} Then there are no inputs with value $-1$, so the parity is $1$. We see that $h_1 = 1$, $h_2 = 2$, $h_3 = 4$, and $h_3 = 0$. So the output is $h_1 + 2h_2 - h_3 = 1$.
    \end{enumerate}
    We conclude that this $4$-neuron network correctly computes the parity of $x_1$, $x_2$, and $x_3$.
\end{proof}

\end{document}